\documentclass[10pt,twocolumn,letterpaper]{article}

\usepackage{wacv}
\usepackage{times}
\usepackage{epsfig}
\usepackage{graphicx}
\usepackage{amsmath}
\usepackage{amssymb}
\usepackage{booktabs}
\usepackage{bm}
\usepackage{bbm}
\usepackage{amsthm}
\usepackage{mathptmx}
\usepackage{listings}
\usepackage{xcolor}
\usepackage{url}

\definecolor{codegreen}{rgb}{0,0.6,0}
\definecolor{codegray}{rgb}{0.5,0.5,0.5}
\definecolor{codepurple}{rgb}{0.58,0,0.82}
\definecolor{backcolour}{rgb}{0.95,0.95,0.92}

\lstdefinestyle{mystyle}{
    commentstyle=\color{codegreen},
    keywordstyle=\color{magenta},
    numberstyle=\tiny\color{codegray},
    stringstyle=\color{codepurple},
    basicstyle=\ttfamily\footnotesize,
    breakatwhitespace=false,         
    breaklines=true,                 
    captionpos=b,                    
    keepspaces=true,                 
    numbers=left,                    
    numbersep=5pt,                  
    showspaces=false,                
    showstringspaces=false,
    showtabs=false,                  
    tabsize=2
}

\lstset{style=mystyle}

\newtheorem{theorem}{Theorem}

\newtheorem{definition}{Definition}
\newtheorem{corollary}{Corollary}

% Include other packages here, before hyperref.

%%%%%%%%%%%%%%%%%%%%%%%%%%%%%%%%%%%%%%%%%%%%%%%%%%%%%%%%%%%%%%%%%%%%%%%%%%%%%%%%
%
%%% IMPORTANT - These next three lines are crucial.
%               (1) PLEASE enter your paper ID (given by CMT) replacing the
%                   '****' right below here with the ID from CMT.
%               (2) Leave the \wacvfinacopy commented out for the submission
%                   version, but UNCOMMENT it for your CAMERA-READY upload.
%               (3) For the camera-ready version, you may be asked to set a
%                   starting page number.  If so, replace the '9876' below with
%                   the starting page number assigned by the publication chair.
 
%(1)
% \def\wacvPaperID{****} % Enter the WACV Paper ID here

%(2)
\wacvfinalcopy % *** Uncomment this line for the final submission

%(3)
\ifwacvfinal
\def\assignedStartPage{1} % *** Enter the assigned starting page number (instead of 9876)
\fi

%%%%%%%%%%%%%%%%%%%%%%%%%%%%%%%%%%%%%%%%%%%%%%%%%%%%%%%%%%%%%%%%%%%%%%%%%%%%%%%%

% If you comment hyperref and then uncomment it, you should delete
% egpaper.aux before re-running latex.  (Or just hit 'q' on the first latex
% run, let it finish, and you should be clear).
\ifwacvfinal
\usepackage[breaklinks=true,bookmarks=false]{hyperref}
\else
\usepackage[pagebackref=true,breaklinks=true,colorlinks,bookmarks=false]{hyperref}
\fi

% Pages are numbered in submission mode, and unnumbered in camera-ready
\ifwacvfinal
\setcounter{page}{\assignedStartPage}
\else
\pagestyle{empty}
\fi

\begin{document}

%%%%%%%%% TITLE
\title{Density-Fixing: Simple yet Effective Regularization Method \\ based on the Class Priors}

\author{Masanari Kimura\\
Ridge-i Inc.\\
{\tt\small mkimura@ridge-i.com}
% For a paper whose authors are all at the same institution,
% omit the following lines up until the closing ``}''.
% Additional authors and addresses can be added with ``\and'',
% just like the second author.
% To save space, use either the email address or home page, not both
\and
Ryohei Izawa\\
Ridge-i Inc.\\
{\tt\small rizawa@ridge-i.com}
}

\maketitle
%\thispagestyle{empty}

%%%%%%%%% ABSTRACT
\begin{abstract}
Machine learning models suffer from overfitting, which is caused by a lack of labeled data.
To tackle this problem, we proposed a framework of regularization methods, called density-fixing, that can be used commonly for supervised and semi-supervised learning.
Our proposed regularization method improves the generalization performance by forcing the model to approximate the class's prior distribution or the frequency of occurrence.
This regularization term is naturally derived from the formula of maximum likelihood estimation and is theoretically justified.
%We further investigated the asymptotic behavior of the proposed method and how the regularization terms behave when assuming a prior distribution of several classes in practice.
We further provide the several theoretical analyses of the proposed method including asymptotic behavior.
Our experimental results on multiple benchmark datasets are sufficient to support our argument, and we suggest that this simple and effective regularization method is useful in real-world machine learning problems.
\end{abstract}

%%%%%%%%% BODY TEXT
\section{Introduction}
Machine learning has achieved great success in many areas.
However, such machine learning models suffer from an over-fitting problem caused by a lack of data~\cite{hawkins2004problem,lawrence1997lessons,dietterich1995overfitting}.
To tackle such problems, research on semi-supervised learning~\cite{zhu2009introduction,kingma2014semi} or regularization~\cite{zhang2018mixup,srivastava2014dropout} has been very active.
%The main idea of semi-supervised learning is to solve supervised learning problems with few labels by utilizing unlabeled data.
%In real-world machine learning problems, labeled data is often scarce, but unlabeled data is abundant.
%Therefore, semi-supervised learning methods that make good use of unlabeled data are essential.

We focus on leveraging the class density of the entire dataset as prior knowledge about labeled and unlabeled data.
This means that we assume that the density of each class is obtained as prior knowledge.
This assumption is a natural one in many actual machine learning problems.
Based on this idea, we propose a framework of regularization methods, called density-fixing, both supervised and semi-supervised settings can commonly use that.
Our proposed density-fixing regularization improves the generalization performance by forcing the model to approximate the class's prior distribution or the frequency of occurrence.
This regularization term of density-fixing is naturally derived from the formula for maximum likelihood estimation and is theoretically justified.
We further investigated the asymptotic behavior of the density-fixing and how the regularization terms behave when assuming a prior distribution of several classes in practice.
Our experimental results on multiple benchmark datasets are sufficient to support our argument, and we suggest that this simple and effective regularization method is useful in real-world problems.

{\bf Contribution}:
We propose the density-fixing regularization, which has the following properties:
\begin{itemize}
    \item simplicity: density-fixing is very simple to implement and has almost no computational overhead. 
    \item naturalness: density-fixing is derived naturally from the formula for maximum likelihood estimation and has a theoretical guarantee. We also provide several theoretical analyses.
    \item versatility: density-fixing is generally applicable to many problem settings.
\end{itemize}
In a nutshell, density-fixing forcing the balance of class density:
\begin{equation}
    L_\theta(\bm{x}, y) = \ell(\bm{x}, y) + \gamma \cdot D_{KL}[p_\theta(y)\|q_(y)], \label{eq:density_fixing}
\end{equation}
where $\ell(\bm{x}, y)$ is the some loss function (e.g. cross-entropy loss), and $\gamma \geq 0$ is the parameter of the regularization term.
For the true distribution $q(y)$ of a class, we can use it if it is given as prior knowledge, otherwise we can average the frequency of occurrence of the labels in the training sample and use it as an estimator $\hat{q}(y)$:
\begin{equation}
    \hat{q}(y) = \{y^1,\dots,y^K\},\ \  y^i = \frac{1}{N}\sum^N_{j=1}\mathbbm{1}_{\{c(\bm{x}_j)=i\}}.
\end{equation}
The sample mean provides the unbiased and consistent estimator of the frequency of class occurrence, so it is sufficient to use it.

The source-code necessary to replicate our CIFAR-10 experiments is available at GitHub~\footnote{\url{https://github.com/nocotan/density_fixing}}.

%The source-code necessary to replicate our CIFAR-10 experiments will be available at GitHub.

\section{Related Works}
In this section, we introduce some related works that are relevant to our work.
\subsection{Over-fitting and Regularization}
Machine learning models suffer from an over-fitting problem caused by a lack of data.
In order to avoid over-fitting, various regularization methods have been proposed.
For example, Dropout~\cite{srivastava2014dropout} is a powerful regularization method that introduces ensemble learning-like behavior by randomly removing connections between neurons of the Deep Neural Network.
Another recently proposed simple regularization method is mixup and its variants~\cite{zhang2018mixup,kimura2020mixup,yun2019cutmix}, which takes a linear combination of training data as a new input.
There are many regularization methods for some specific models (e.g., for Generative Adversarial Networks~\cite{roth2017stabilizing,kimura2018anomaly}).

\subsection{Semi-Supervised Learning}
There are many studies on semi-supervised learning~\cite{zhu2009introduction,zhu2005semi}.
The method of assigning pseudo-labels to unlabeled data as new training data is very popular~\cite{lee2013pseudo}.
Another approach to semi-supervised learning is the use of Generative Adversarial Networks, which are famous for their expressive power~\cite{kumar2017semi}.

\section{Notations and Problem Formulation}
Let $\mathcal{X}$ be the input space, $\mathcal{Y} = \{1,\dots,K\}$ be the output space, $K$ be the number of classes and $\mathcal{C}$ be a set of concepts we may wish to learn.
We assume that each input vector $\bm{x}\in\mathbb{R}^d$ is of dimension d. We also assume that examples are independently and identically distributed (i.i.d) according to some fixed but unknown distribution $\mathcal{D}$.

Then, the learning problem formulated as follows: we consider a fixed set of possible concepts $\mathcal{H}$, called hypothesis set.
We receives a sample $\mathcal{B} = (\bm{x}_1,\dots,\bm{x}_N)$ drawn i.i.d. according to $\mathcal{D}$ as well as the labels $(c(\bm{x}_1),\dots,c(\bm{x}_N))$, which are based on a specific target concept $c\in\mathcal{C}: \mathcal{X}\mapsto\mathcal{Y}$. 
In the semi-supervised learning problem, we additionally have access to unlabeled sample $\mathcal{B}^U = (\bm{x}'_1,\dots,\bm{x}'_{N_U})$ drawn i.i.d according to $\mathcal{D}$.
Our task is to use the labeled sample $\mathcal{B}$ and unlabeled sample $\mathcal{B}^U$ to find a hypothesis $h^*\in\mathcal{H}$ that has a small generalization error for the concept $c$.
The generalization error $\mathcal{R}$ is defined as follows.
\begin{definition}{(Generalization error)}
\label{def:generalization_error}
Given a hypothesis $h\in{\mathcal{H}}$, a target concept $c\in\mathcal{C}$, and unknown distribution $\mathcal{D}$, the generalization error of $h$ is defined by
\begin{equation}
    \mathcal{R}(h) = \mathbb{E}_{x\sim{D}}\Big[\mathbbm{1}_{h(\bm{x})\neq{c(\bm{x}})}\Big],
\end{equation}
where $\mathbbm{1}_\omega$ is the indicator function of the event $\omega$.
\end{definition}
The generalization error of a hypothesis $h$ is not directly accessible since both the underlying distribution $D$ and the target concept $c$ are unknown
Then, we have to measure the empirical error of hypothesis $h$ on the observable labeled sample $\mathcal{B}$. The empirical error $\hat{\mathcal{R}}(h)$ is defined as follows.
\begin{definition}{(Empirical error)}
\label{def:empirical_error}
Given a hypothesis $h\in{H}$, a target concept $c\in\mathcal{C}$, and a sample $\mathcal{B} = (\bm{x}_1,\dots,\bm{x}_N)$, the empirical error of $h$ is defined by
\begin{equation}
    \hat{\mathcal{R}}(h) = \frac{1}{n}\sum^n_{i=1}\mathbbm{1}_{h(\bm{x}_i)\neq{c(\bm{x}_i)}}.
\end{equation}
\end{definition}

In learning problems, we are interested in how much difference there is between empirical and generalization errors.
Therefore, in general, we consider the relative generalization error $\hat{\mathcal{R}}(h)-\mathcal{R}(h)$.
In the following sections, we derive a regularization method that reduces the relative generalization error $\hat{\mathcal{R}}(h)-\mathcal{R}(h)$ and asymptotic behavior of our proposed method.

\section{Density-Fixing Regularization}
In this paper, we assume that $\mathcal{H}_p$ is a class of functions mapping input vectors to the class densities:
\begin{equation}
    h(\bm{x}) = \Big\{\bm{x}\mapsto p(y|\bm{x}) \Big\},
\end{equation}
Therefore, we can replace the learning problem with a problem that approximates the true distribution $q(y|\bm{x})$ with the estimated distribution $p(y|\bm{x})$.

We assume that the class-conditional probability for labeled data $p(\bm{x}|y)$ and that for unlabeled data (or test data) $q(\bm{x}|y)$ are the same:
\begin{equation}
    p(\bm{x}|y) = q(\bm{x}|y). \label{eq:class_conditional}
\end{equation}
Then, our goal is to estimate $q(y|\bm{x})$ from labeld data $\{\bm{x}_i, y_i\}^N_{i=1}$ drawn i.i.d from $p(\bm{x}, y)$ and unlabeled data $\{\bm{x}'_i\}^{N_U}_{i=1}$

\begin{theorem}
Let $p_\theta(y|\bm{x})$ be the estimated distribution parameterized by $\theta$, and $q(y|\bm{x})$ be the true distribution.
Then, we can write the sum of log-likelihood function as follows:
\begin{equation}
    \sum\log{L(\theta)} = \sum\log{p_\theta(y|\bm{x})} - D_{KL}[p_\theta(y)\| q(y)], \label{eq:kld_reg}
\end{equation}
where $D_{KL}[P\| Q]$ is the Kullback-Leibler divergence~\cite{kullback1951on} from $Q$ to $P$:
\begin{eqnarray}
    D_{KL}[P\|Q] &=& \sum P(x)\log{\frac{P(x)}{Q(x)}} \\
    &=& -\sum P(x)\log{\frac{Q(x)}{P(x)}}
\end{eqnarray}
\end{theorem}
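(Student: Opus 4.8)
The plan is to realize the stated identity as a consequence of Bayes' rule together with the class-conditional assumption in~\eqref{eq:class_conditional}, reading the left-hand side $\sum\log L(\theta)$ as the log-likelihood of the labels under the \emph{true} posterior $q(y|\bm{x})$ and the first right-hand term as the log-likelihood under the \emph{model} posterior $p_\theta(y|\bm{x})$. The mismatch between the two priors then gets collected into the Kullback--Leibler term. Concretely, I would begin by writing out Bayes' rule for both distributions,
\begin{equation}
p_\theta(y|\bm{x}) = \frac{p_\theta(\bm{x}|y)\,p_\theta(y)}{p_\theta(\bm{x})}, \qquad q(y|\bm{x}) = \frac{q(\bm{x}|y)\,q(y)}{q(\bm{x})}.
\end{equation}

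Next I would invoke the hypothesis $p(\bm{x}|y)=q(\bm{x}|y)$ from~\eqref{eq:class_conditional}, identifying $p_\theta(\bm{x}|y)$ with this common class-conditional density, and additionally use that the input marginals coincide, $p_\theta(\bm{x})=q(\bm{x})$ (both are the same mixture of a shared class-conditional once the priors are normalized over the same data). Dividing the two Bayes expressions then cancels every factor except the priors, giving the key pointwise relation
\begin{equation}
q(y|\bm{x}) = p_\theta(y|\bm{x})\cdot\frac{q(y)}{p_\theta(y)}.
\end{equation}

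Taking logarithms and summing over the sample $\mathcal{B}$ yields
\begin{equation}
\sum\log q(y|\bm{x}) = \sum\log p_\theta(y|\bm{x}) - \sum\log\frac{p_\theta(y)}{q(y)},
\end{equation}
so it remains only to recognize the last term. Since the labels are, asymptotically and by the consistency of $\hat{q}$, distributed according to the model prior $p_\theta(y)$, the average $\sum\log\bigl(p_\theta(y)/q(y)\bigr)$ is a Monte Carlo estimate of $\mathbb{E}_{p_\theta(y)}\bigl[\log(p_\theta(y)/q(y))\bigr]$, which is exactly $D_{KL}[p_\theta(y)\|q(y)]$. Substituting this, and reading $L(\theta)$ as the likelihood induced by the true posterior, reproduces~\eqref{eq:kld_reg}.

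I expect the delicate steps to be the two marginalization assumptions rather than the algebra. The cancellation of $p_\theta(\bm{x})$ against $q(\bm{x})$ is not literally implied by~\eqref{eq:class_conditional} alone, so I would either state it as an explicit modelling assumption or argue it from the shared class-conditional together with a matched prior; getting this right is what makes the $\bm{x}$-dependence drop out cleanly. The second subtlety is the passage from the finite sum to the divergence: one must ensure that the expectation defining $D_{KL}$ is taken under $p_\theta(y)$, which fixes the direction of the asymmetric KL, and that the per-sample normalization is handled consistently so that a single divergence term, rather than an $N$-fold one, appears in~\eqref{eq:kld_reg}.
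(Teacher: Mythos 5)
Your proposal follows essentially the same route as the paper's proof: Bayes' rule for both distributions, the class-conditional equality $p(\bm{x}|y)=q(\bm{x}|y)$ to cancel everything except the priors, the pointwise relation $q(y|\bm{x})\propto \frac{q(y)}{p_\theta(y)}p_\theta(y|\bm{x})$, and the identification of the summed log prior ratio with $-D_{KL}[p_\theta(y)\|q(y)]$ by passing to an expectation. The only cosmetic difference is that you cancel $p_\theta(\bm{x})/q(\bm{x})$ via an explicit marginal-matching assumption where the paper absorbs it into a constant of proportionality in $y$, and you explicitly flag the same normalization and expectation-direction caveats that the paper's $\simeq$ step glosses over.
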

This means that when we consider maximum likelihood estimation, we can decompose the objective function into two terms: the term depending on $\bm{x}$ and the term depending only on $y$.

\begin{proof}
From Bayes' theorem, we can obtain
\begin{eqnarray}
p(\bm{x}|y) &=& \frac{p(y|\bm{x})p(\bm{x})}{p(y)}, \label{eq:bayes1} \\
q(\bm{x}|y) &=& \frac{q(y|\bm{x})q(\bm{x})}{q(y)}. \label{eq:bayes2}
\end{eqnarray}
Then, combining Eq~\eqref{eq:class_conditional}, \eqref{eq:bayes1} and \eqref{eq:bayes2},
\begin{eqnarray}
\frac{p(y|\bm{x})p(\bm{x})}{p(y)} &=& \frac{q(y|\bm{x})q(\bm{x})}{q(y)} \nonumber\\
q(y|\bm{x}) &=& \frac{q(y)}{p(y)}\frac{p(\bm{x})}{q(\bm{x})}p(y|\bm{x}) \nonumber\\
&\propto& \frac{q(y)}{p(y)}p(y|\bm{x}).
\end{eqnarray}
The maximum likelihood estimator is the maximizer of the likelihood:
\begin{eqnarray}
    \hat{\theta} &=& argmax_{\theta}\prod^N_{i=1}\frac{q(y)}{p_\theta(y)}p_\theta(y|\bm{x}) \nonumber \\
    &=& argmax_{\theta}\sum^N_{i=1}\log{\Big\{\frac{q(y)}{p_\theta(y)}p_\theta(y|\bm{x})\Big\}}.
\end{eqnarray}
Considering the maximization of the likelihood function, we can have the log-likelihood function $\log{L(\theta)}$ as follows:
\begin{eqnarray}
\log{L(\theta)} &=& \log{\Biggl\{\frac{q(y)}{p_\theta(y)}p_\theta(y|\bm{x})\Biggr\}} \nonumber\\
&=& \log{p_\theta(y|\bm{x})} + \log{\frac{q(y)}{p_\theta(y)}}. \label{eq:mle_with_class_density}
\end{eqnarray}
Finally, we compute sum of log-likelihood function,
\begin{eqnarray}
\sum\log{L(\theta)} &\simeq& \sum\log{p_\theta(y|\bm{x})} + \mathbb{E}\Biggl[\log{\frac{q(y)}{p_\theta(y)}}\Biggr] \nonumber\\
&=& \sum\log{p_\theta(y|\bm{x})} - D_{KL}[p_\theta(y)\|q(y)]. \nonumber\\
\end{eqnarray}
and then, we have Eq~\eqref{eq:kld_reg}.
\end{proof}
Considering that we maximize Eq~\eqref{eq:kld_reg}, it is clear that $D_{KL}[p_\theta(y)\|q(y)]$ should be small value or closer to $0$.
The Kullback–Leibler divergence is defined if and only if $\forall{y}$, $q(y)=0$ implies $p_\theta(y)=0$, and this property is so called absolute continuity.

From the above theorem, if the probability of class occurrence is known in advance, it can be used to perform regularization.
We call this term density-fixing regularization.
Regularization is performed so that the density of each class in the inference result for the unlabeled sample $p_\theta(y)$ approximates the $q(y)$.

In addition, the well-known properties of KL-divergence also naturally lead to the following proposition about density-fixing.
\begin{corollary}{(Zero-forcing property)}
\label{cor:zero-forcing}
The estimation $p_\theta$ that approximates $q$ satisfies $p_\theta(y) = 0 \Rightarrow q(y) = 0$.
\end{corollary}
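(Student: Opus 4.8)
The plan is to read the corollary off the two standard properties of the Kullback--Leibler divergence that the Theorem has already brought into play: its nonnegativity and the identity-of-indiscernibles clause of Gibbs' inequality, i.e. $D_{KL}[p_\theta(y)\|q(y)] \geq 0$ with equality if and only if $p_\theta(y)=q(y)$ for all $y$. Because the density-fixing term in Eq.~\eqref{eq:density_fixing} penalizes exactly this divergence, minimizing the regularized objective drives $D_{KL}[p_\theta(y)\|q(y)]$ to its floor, and at that point the two class-marginals coincide; the support condition then follows at once.

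Concretely, first I would use the decomposition of Eq.~\eqref{eq:kld_reg} to single out $D_{KL}[p_\theta(y)\|q(y)]$ as the only part of the objective that couples $p_\theta(y)$ to $q(y)$, so that optimizing $\theta$ amounts to minimizing this divergence. Next I would apply the equality condition of Gibbs' inequality to obtain $p_\theta(y)=q(y)$ pointwise at the optimum. Finally I would read off $\{y:p_\theta(y)=0\}=\{y:q(y)=0\}$, which gives the claimed implication $p_\theta(y)=0 \Rightarrow q(y)=0$ (and, as a bonus, its converse).

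The hard part will be reconciling the direction of the implication with what the divergence actually enforces. Mere \emph{finiteness} of $D_{KL}[p_\theta(y)\|q(y)]$ --- the absolute-continuity remark made just before the corollary --- only yields $q(y)=0 \Rightarrow p_\theta(y)=0$, since a vanishing summand $0\cdot\log(0/q(y))$ places no constraint on $q$ where $p_\theta(y)=0$; this is the \emph{reverse} of the stated claim. To get $p_\theta(y)=0 \Rightarrow q(y)=0$ I therefore cannot lean on definedness alone and must invoke the stronger equality $p_\theta=q$ supplied by the identity-of-indiscernibles clause. Pinning down the precise sense in which the minimizer realizes $p_\theta=q$ --- exactly, in the realizable regime, versus only up to the residual divergence when $\mathcal{H}_p$ cannot represent $q$ --- is the step that will require the most care.
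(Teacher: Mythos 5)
Your route through the equality clause of Gibbs' inequality is genuinely different from the paper's, and it has a gap that the paper's argument avoids. The paper never invokes $D_{KL}=0 \Leftrightarrow p_\theta=q$: it writes the divergence as $\int p_\theta(y)\log\bigl(p_\theta(y)/q(y)\bigr)$, observes that if $q(y_0)=0$ while $p_\theta(y_0)>0$ then the integrand blows up in a neighborhood of $y_0$, and concludes that any minimization will drive $p_\theta$ to zero wherever $q$ vanishes. That argument needs no realizability assumption --- it constrains every minimizer over an arbitrary family $\mathcal{H}_p$, not just an ideal one that can match $q$ exactly --- and it is precisely the ``zero-forcing'' behaviour the corollary is named after. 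What it establishes, however, is $q(y)=0\Rightarrow p_\theta(y)=0$: the direction you correctly identify as coming from finiteness/absolute continuity, and the direction the paper itself restates immediately after its proof (``$\hat{p}_\theta(y)=0$ for $y$ at which $q(y)=0$''). The implication printed in the corollary box is written backwards relative to both the paper's own proof and its own discussion.

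Your plan chases the literal printed direction, and the step you flag as ``requiring the most care'' is the step that fails. Outside the realizable regime the minimizer of $D_{KL}[p_\theta(y)\|q(y)]$ over a restricted $\mathcal{H}_p$ need not equal $q$, so the identity-of-indiscernibles clause gives you nothing; and the direction $p_\theta(y)=0\Rightarrow q(y)=0$ can then simply be false, since a constrained family may assign zero mass to a class that $q$ supports at no divergence cost (the offending summand $0\cdot\log(0/q(y))$ vanishes by convention, as you note yourself). So your argument proves the stated implication only under the extra hypothesis that $q$ lies in the model class, whereas the paper's blow-up argument proves the reversed, intended implication unconditionally. The fix is not more care about the minimizer but adopting the paper's reading of the statement and promoting the absolute-continuity observation you set aside as the ``reverse'' direction into the main argument.
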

\begin{proof}
If we approximate $p_\theta(y)$ and $q(y)$ by continuous functions, then the KL-divergence is given as follows:
\begin{equation*}
    D_{KL} = \int p_\theta(y)\log{\frac{p_\theta(y)}{q(y)}}.
\end{equation*}
Since $p_\theta$ and $q$ are continuous functions and then, if $q(y) = 0$ at some point $y_0$, there exists a neighborhood around $y_0$, where $q(y)<\epsilon$.
Therefore, if $p_\theta(y)>0$, $p_\theta(y)/q(y)$ will be very large in that neighborhood, which makes the integral grow very large.
Thus, any minimization algorithm will result in a $p_\theta(y)$, where $p_\theta(y) = 0$ if $q(y) = 0$. 
\end{proof}
This mean that the best approximation $\hat{p}_\theta$ satisfies
\begin{equation}
    \hat{p}_\theta(y) = 0,
\end{equation}
for $y$ at which $q(y)=0$. This property is called zero-forcing, and we can see that our regularization behave as if the probabilities of classes we do not know remain $0$.

\begin{figure}[t]
    \centering
    \includegraphics[scale=0.3, bb=100 0 749 377]{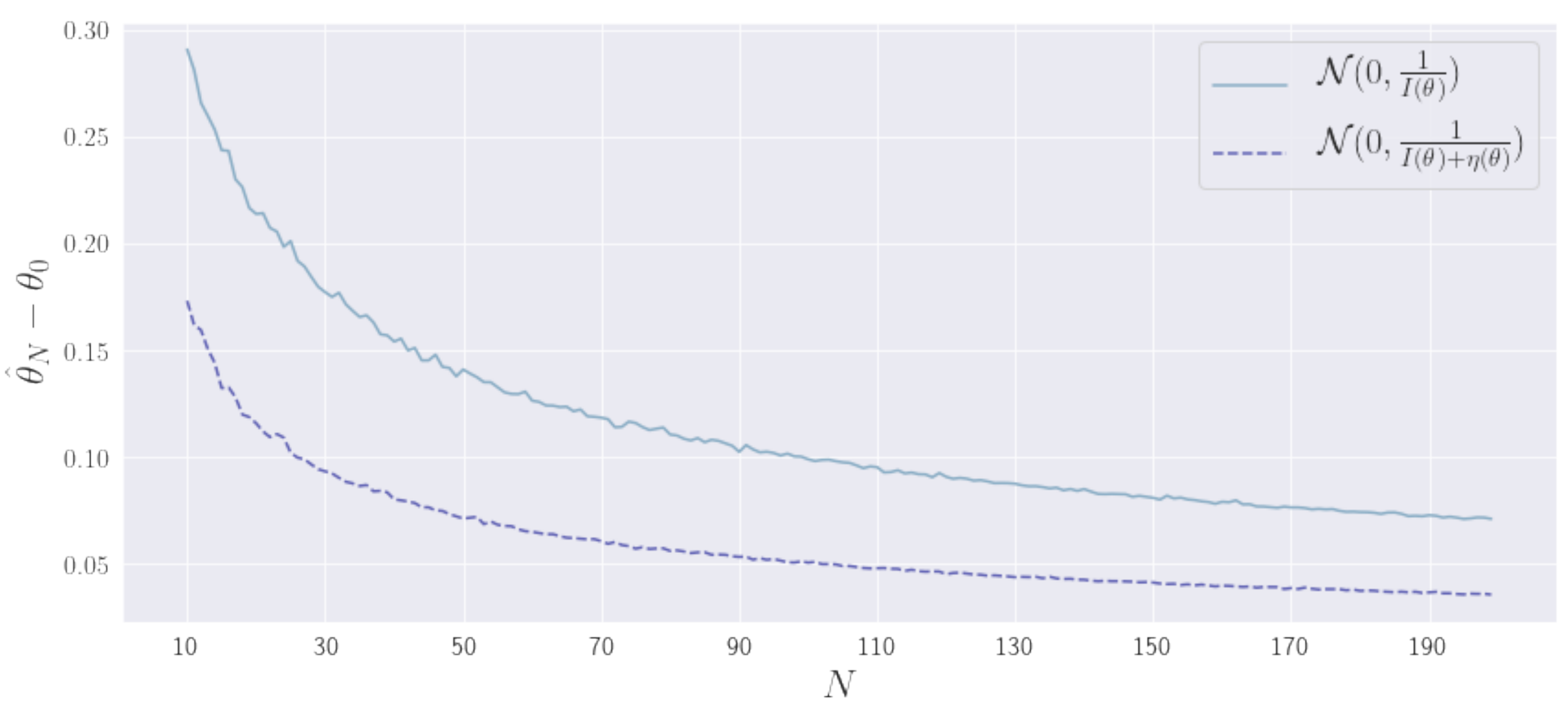}
    \caption{\label{fig:asymptotic_variance} Difference in the behavior of the asymptotic variance of the maximum likelihood estimator by density-fixing regularization.
    We can see that our regularization improves the error of the estimators quickly.}
\end{figure}

\section{Asymptotic Normality}
In this section, we discuss how the density-fixing regularization behaves asymptotically.
In general, the bias of the estimator $\hat{\theta}$ is defined by
\begin{equation}
    b(\hat{\theta}) = \mathbb{E}[\hat{\theta}] - \theta, 
\end{equation}
when $\theta$ is the true parameter.
An estimator is unbiased when $b(\hat{\theta})=0$.

The asymptotic theory studies the behavior of an estimator when sample size $N$ is large.
When the bias satisfies
\begin{equation}
    \lim_{N\to\infty}b(\hat{\theta}) = 0,
\end{equation}
this estimator is asymptotically unbiased.
It is known that the maximum likelihood estimator is asymptotically unbiased.
This means that the maximum likelihood estimator converges to the true parameter as $N$ tends to infinity:
\begin{equation}
    \lim_{N\to\infty}\hat{\theta} = \theta.
\end{equation}
Then, the accuracy of an estimator is measured by the error covariance matrix, $\bm{V} = (V_{ij})$,
\begin{equation}
    V_{ij} = \mathbb{E}\Big[(\hat{\theta}_i - \theta_i)(\hat{\theta}_j - \theta_j)\Big].
\end{equation}
The following theorem states that density-fixing reduces the error covariance of the maximum likelihood estimator depending on the shape of the distribution.

\begin{theorem}\label{thm:asym_reg}
Let $\ell(\theta) = \log L(\theta)$.
The asymptotic variance of the maximum likelihood estimator applying the density-fixing regularization is given by $\frac{1}{I(\theta) + \eta(\theta)}$. Here, $\eta(\theta)$ is a function that always takes a positive value, parameterized by $\theta$.

\end{theorem}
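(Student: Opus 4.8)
The plan is to combine the classical asymptotic normality of the maximum likelihood estimator with the additive decomposition of the objective established in the first theorem. Recall that for an ordinary (unregularized) MLE the standard argument expands the score $\partial_\theta \ell(\theta)$ to first order about the true parameter, yielding $\sqrt{N}(\hat\theta-\theta)\xrightarrow{d}\mathcal{N}(0, I(\theta)^{-1})$, where $I(\theta) = -\mathbb{E}[\partial_\theta^2 \log p_\theta(y|\bm{x})]$ is the Fisher information; the asymptotic variance is thus the reciprocal of the negative expected curvature of the objective at $\theta$. My approach is to repeat this expansion for the \emph{regularized} objective and simply read off the new curvature.

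First I would write, using Eq.~\eqref{eq:kld_reg}, the regularized log-likelihood as $\ell(\theta) = \sum \log p_\theta(y|\bm{x}) - D_{KL}[p_\theta(y)\|q(y)]$ and differentiate twice in $\theta$. Under the usual regularity conditions (smoothness of $\theta\mapsto p_\theta$ and interchange of differentiation and summation) the first term reproduces the Fisher information $I(\theta)$. Since the penalty depends on $\theta$ alone and not on the data, it contributes a deterministic curvature $\eta(\theta) := \partial_\theta^2 D_{KL}[p_\theta(y)\|q(y)]$, so the negative expected Hessian of the objective becomes $I(\theta) + \eta(\theta)$ and the asymptotic variance is $\frac{1}{I(\theta)+\eta(\theta)}$.

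The crux is showing $\eta(\theta) > 0$. Differentiating $D_{KL}[p_\theta\|q] = \sum_y p_\theta(y)\log\frac{p_\theta(y)}{q(y)}$ once and using normalization $\sum_y p_\theta(y)=1$ (hence $\sum_y \partial_\theta p_\theta(y)=0$) gives $\partial_\theta D_{KL} = \sum_y (\partial_\theta p_\theta(y))\log\frac{p_\theta(y)}{q(y)}$. Differentiating again yields a term proportional to $\log\frac{p_\theta(y)}{q(y)}$ plus the term $\sum_y (\partial_\theta p_\theta(y))^2/p_\theta(y)$. Evaluating along the consistent estimate, where $p_\theta(y)\to q(y)$ and the logarithmic term vanishes, leaves
\begin{equation*}
\eta(\theta) = \sum_y \frac{(\partial_\theta p_\theta(y))^2}{p_\theta(y)} = \mathbb{E}\!\left[\Big(\partial_\theta \log p_\theta(y)\Big)^2\right] \ge 0,
\end{equation*}
which is precisely the Fisher information of the marginal class distribution $p_\theta(y)$ and is strictly positive whenever $p_\theta(y)$ genuinely depends on $\theta$. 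This simultaneously delivers the claimed form and explains why $\eta$ always takes a positive value.

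The main obstacle I anticipate is the bookkeeping of orders: the log-likelihood scales with $N$ while a fixed penalty does not, so a fully rigorous statement requires either letting $\gamma$ scale with $N$ or reading the curvature at the level of a single contribution, where likelihood and penalty enter on equal footing. The second delicate point is the vanishing of the indefinite-sign logarithmic term, which relies on evaluating the curvature at the point where the marginal matches $q$; away from that point $\eta(\theta)$ need not be sign-definite, so the statement is best understood asymptotically along the consistent MLE, together with the standard regularity assumptions needed to pass to the limit.
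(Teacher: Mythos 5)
Your overall skeleton matches the paper's: both arguments run the standard Taylor-expansion / asymptotic-normality derivation for the MLE and then identify the extra curvature contributed by the penalty, so that the asymptotic variance becomes $1/(I(\theta)+\eta(\theta))$. Where you genuinely diverge is in how positivity of $\eta$ is established. The paper sets $\eta(\theta) = -\partial^2_\theta\,\mathbb{E}[\log p_\theta(y)]$ (the $\log q(y)$ term drops as $\theta$-free) and proves positivity by \emph{assuming $p_\theta(y)$ is an exponential family}, in which case the second derivative of the log-density in the natural parameter equals $-N\,\mathrm{Var}(T(\bm{y}))<0$. You instead differentiate the full divergence $\sum_y p_\theta(y)\log\bigl(p_\theta(y)/q(y)\bigr)$ twice, use normalization to kill the first-order term, and evaluate at the consistent point $p_\theta=q$, where the sign-indefinite logarithmic term vanishes and what remains is the Fisher information of the marginal class distribution, $\sum_y(\partial_\theta p_\theta(y))^2/p_\theta(y)\ge 0$. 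Your route avoids the exponential-family restriction and gives a cleaner information-geometric reading (the Hessian of KL at its minimum is a Fisher metric), but it only certifies positivity asymptotically along the consistent estimator and needs the non-degeneracy condition that $p_\theta(y)$ genuinely varies with $\theta$; the paper's route holds for all $\theta$ but only within the exponential family. You also flag the $O(N)$-versus-$O(1)$ scaling of likelihood against penalty, which the paper glosses over by silently replacing the summed penalty with a single expectation; your observation that the penalty must enter per sample (or $\gamma$ must grow with $N$) for it to survive in the asymptotic variance addresses a real looseness in the paper's own argument.
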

\begin{proof}
In the maximum likelihood estimator $\hat{\theta}_N$ for the number of samples $N$, we can obtain the following by Taylor expansion of $\ell(\theta)$ around $\theta_0$:
\begin{eqnarray}
0 &=& \frac{\partial\ell(\theta_0)}{\partial\theta} + \frac{\partial^2\ell(\theta_0)}{\partial\theta^2}(\hat{\theta}_N-\theta_0) \nonumber\\
&& + \frac{\partial^3\ell(\theta_0)}{\partial\theta^3}\frac{(\hat{\theta}_N-\theta_0)^2}{2!} \nonumber \\
\hat{\theta}_N - \theta_0 &=& - \frac{2\ell'(\theta_0)}{2\ell''(\theta_0)+\ell'''(\theta_0)(\hat{\theta}_N - \theta_0)} \nonumber \\
\sqrt{N}(\hat{\theta}_N - \theta_0) &=& - \frac{2\ell'(\theta_0)/\sqrt{N}}{2\ell''(\theta_0)/N + \ell'''(\theta_0)(\hat{\theta}_N - \theta_0)/N}, \label{eq:mle_asym1} \nonumber\\
\end{eqnarray}
here, we assume that $\ell(\theta)$ has the third-order derivative with respect to parameter $\theta$ and $\frac{\partial^3}{\partial\theta^3}\ell(\theta)$ be bounded.
From Eq~\eqref{eq:mle_asym1} and central limit theorem, we can obtain
\begin{eqnarray}
\sqrt{N}(\hat{\theta}_N-\theta_0) \sim N\Big(0, \frac{1}{I(\theta)}\Big),
\end{eqnarray}
when $N$ is sufficiently large.
Here, $I(\theta)$ is the Fisher information matrix:
\begin{equation}
    I(\theta) = -\mathbb{E}\Big[\frac{\partial^2}{\partial\theta^2}\ell(\theta)\Big].
\end{equation}
Then, let $\log{p_\theta(y|\bm{x})}=f(\theta)$ as the original likelihood function, we can obtain
\begin{eqnarray}
&& \frac{\partial^2}{\partial\theta^2}\Biggl\{\log{f(\theta)} + \mathbb{E}\Big[\log{\frac{q(y)}{p_\theta(y)}}\Big]\Biggr\} \nonumber \\
&=& \frac{\partial^2}{\partial\theta^2}f(\theta) + \frac{\partial^2}{\partial\theta^2}\Biggl\{\mathbb{E}\Big[\log{\frac{q(y)}{p_\theta(y)}}\Big]\Biggr\} \nonumber \\
&=& \frac{\partial^2}{\partial\theta^2}f(\theta) + \frac{\partial^2}{\partial\theta^2}\mathbb{E}\Big[\log{q(y)} - \frac{\partial^2}{\partial\theta^2}\log{p_\theta(y)}\Big] \nonumber \\
&=& \frac{\partial^2}{\partial\theta^2}f(\theta) - \frac{\partial^2}{\partial\theta^2}\mathbb{E}\Big[\log{p_\theta(y)}\Big].
\end{eqnarray}
Therefore, the maximum likelihood estimator applying the density-fixing regularization $\hat{\theta}^*_N$ satisfies the following:
\begin{equation}
\sqrt{N}(\hat{\theta}^*_N-\theta_0) \sim N\Biggl(0, \frac{1}{I(\theta) - \frac{\partial^2}{\partial\theta^2}\mathbb{E}[\log{p_\theta(y)}]}\Biggr).
\end{equation}

Here, let $p_\theta(y)$ be the exponential family.
An exponential family of probability distribution is written as
\begin{equation}
    p(y, \bm{\theta}) = \exp{\Big\{\zeta(\bm{\theta})T(\bm{y}) + k(\bm{y}) - \psi(\bm{\theta})\Big\}},
\end{equation}
where $\zeta(\bm{\theta})$ is the natural parameter and $T(\bm{y})$ is the sufficient statistcs of $y$.
Let $h(\bm{y}) = \exp{(k(\bm{y}))}$ and $g(\bm{\theta}) = \exp{(-\psi(\bm{\theta}))}$, we can have
\begin{align}
    p(y, \bm{\theta}) &= h(y)g(\bm{\theta})\exp{\Big\{\zeta(\bm{\theta})T(y)\Big\}}, \\
    \log{p(y, \bm{\theta})} &= \log{(h(y))} + \log{(g(\bm{\theta}))} + \zeta(\bm{\theta})T(y). \nonumber\\
\end{align}
Here, expressing $\psi(\bm{\theta})$ by $\psi(\bm{\zeta})$ through variable transformation, we obtain the following equation:
\begin{align}
    p(\bm{y}, \bm{\zeta}) &= \Biggl(\prod_{i=1}h(y_i)\Biggr)(g(\zeta))^n\exp{\Biggl(\sum_{i=1}\bm{\zeta}T(y_i)\Biggr)} \nonumber \\
     &= \Biggl(\prod_{i=1}h(y_i)\Biggr)(g(\zeta))^n\exp{\Biggl(N\cdot\bm{\zeta}\sum_{i=1}T(y_i)\Biggr)} \nonumber \\
     \log{p(\bm{y}, \bm{\zeta})} &= \log{\prod_{i=1} h(y_i)} + N\Biggl\{\log{(g(\bm{\zeta}))} + \bm{\zeta}\sum_{i=1}T(y_i)\Biggr\}. \nonumber \\
\end{align}
Then, the second-order derivative of $\log{p(\bm{y}) by \bm{\zeta}}$ is
\begin{align}
    \frac{\partial}{\partial\bm{\zeta}} &= N\frac{g'(\bm{\zeta})}{g(\bm{\zeta})} + N\cdot T(\bm{y}) \nonumber \\
    \frac{\partial}{\partial\zeta^2} &= \frac{\partial}{\partial\zeta}\Big(N\frac{g'(\bm{\zeta})}{g(\bm{\zeta})} + N\cdot T(\bm{y})\Big) \nonumber \\
    &= N\Big(\frac{g''(\bm{\zeta})}{g(\bm{\zeta})} - \Big(\frac{g'(\bm{\zeta})}{g(\bm{\zeta})}\Big)^2\Big) 
    = -N\cdot Var(T(\bm{y})).
\end{align}
This means that the second-order derivative of the log-likelihood function of the exponential family has always negative value.
Therefore, we can obtain the proof of Theorem~\ref{thm:asym_reg} with $\eta(\theta) = -\frac{\partial}{\partial\theta^2}\mathbb{E}[\log{p_\theta(y)}]$.
\end{proof}
This theorem implies that the convergence rate of the asymptotic variance of the maximum likelihood estimator becomes faster by $\eta(\theta)$ by applying the density-fixing regularization.
Figure~\ref{fig:asymptotic_variance} illustrates the asymptotic behavior of the estimator by our regularization.

\begin{figure}[t]
    \centering
    \includegraphics[scale=0.28, bb=0 0 954 352]{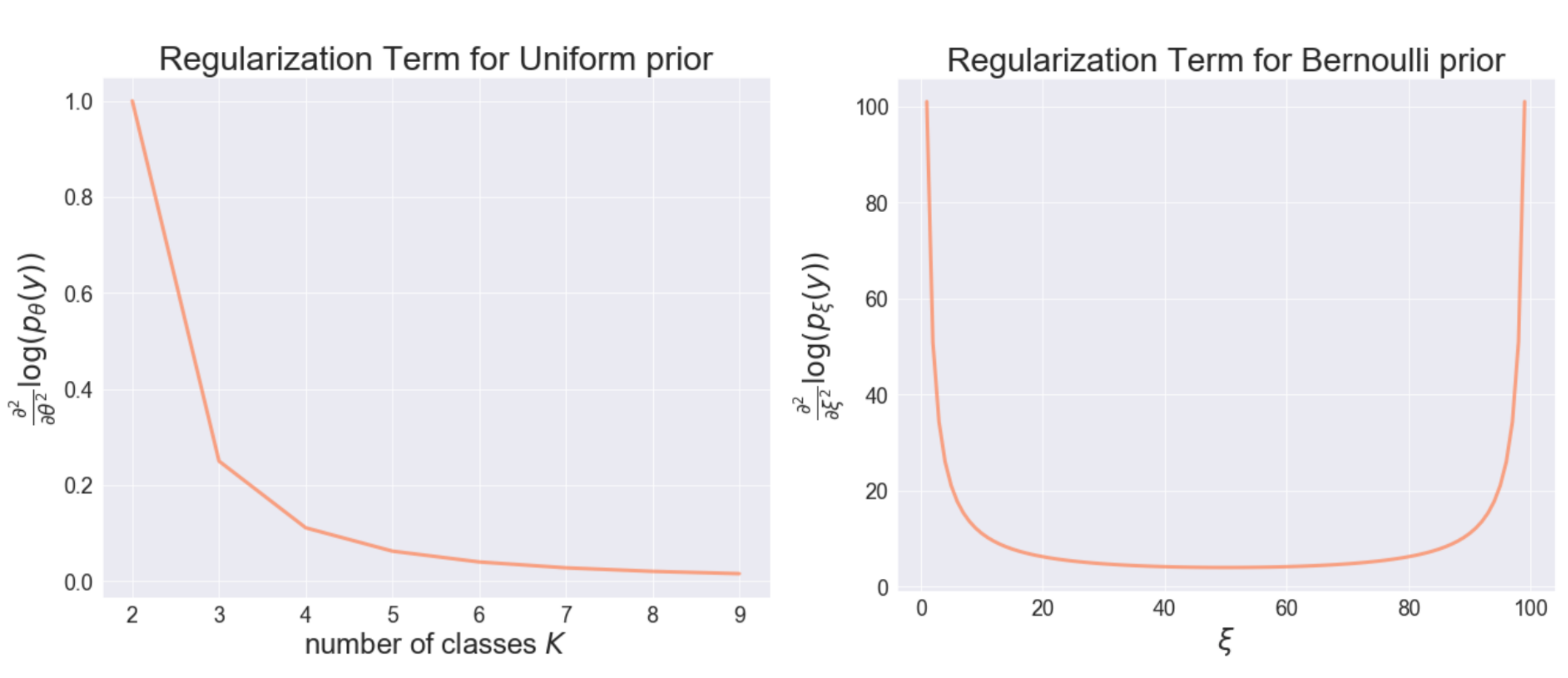}
    \caption{\label{fig:regularization_term}Behavior of the regularization term on each parameter. Left: the regularization term when $p(y)$ is a discrete uniform distribution.
    Right: the regularization term when $p(y)$ is a Bernoulli distribution.}
\end{figure}

\subsection{Some Examples}
In this subsection, we investigate the behavior of our proposed method by assuming some class distributions as examples.
To summarize our results:
\begin{itemize}
    \item For discrete uniform distribution, the effect of regularization becomes weaker as the number of classes increases,
    \item For Bernoulli distribution, our regularization behaves to give strong regularization when there is a class imbalance.
\end{itemize}
Figure~\ref{fig:regularization_term} shows the behavior of the regularization terms under each distribution.

\subsubsection{Discrete Uniform Distribution}
We assume that the probability density function of classes $p(y)$ is as follows:
\begin{equation}
    p(y) = \frac{1}{K-1},
\end{equation}
here $K$ is the number of classes.
This is the discrete uniform distribution $U(1, K)$.
From theorem~\ref{thm:asym_reg}, the following corollary can be derived.
\begin{corollary}
The effect of regularization becomes weaker as the number of classes increase when the class probability follows a discrete uniform distribution.
\end{corollary}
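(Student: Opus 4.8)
The plan is to read off the ``effect of regularization'' from the quantity $\eta(\theta)$ produced in the proof of Theorem~\ref{thm:asym_reg}, since the asymptotic variance of the density-fixing estimator is $1/(I(\theta)+\eta(\theta))$ and a larger $\eta(\theta)$ means a stronger regularizing boost to the Fisher information. Recall that for an exponential-family marginal the proof of Theorem~\ref{thm:asym_reg} identifies $\eta(\theta) = -\frac{\partial^2}{\partial\theta^2}\mathbb{E}[\log p_\theta(y)]$ with a positive multiple of $\mathrm{Var}(T(y))$, the variance of the sufficient statistic under $p_\theta(y)$. So it suffices to track how this variance depends on the number of classes $K$ when $p_\theta(y)$ sits at the discrete uniform target $q(y)$.

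First I would write the class distribution as a categorical model in exponential-family form, with indicator sufficient statistics $T_i(y)=\mathbbm{1}_{\{y=i\}}$ and class probabilities $\pi_i$, so that the uniform target corresponds to $\pi_i = 1/K$ for every $i$. Then I would evaluate the relevant variance at this configuration: each component $T_i$ is Bernoulli with success probability $\pi_i = 1/K$, hence $\mathrm{Var}(T_i(y)) = \pi_i(1-\pi_i) = \frac{K-1}{K^2}$.

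Next I would establish monotonicity in $K$. Writing $\frac{K-1}{K^2} = \frac{1}{K}-\frac{1}{K^2}$, a single derivative gives $\frac{d}{dK}\frac{K-1}{K^2} = \frac{2-K}{K^3}\le 0$ for $K\ge 2$, so the variance is nonincreasing in $K$ and tends to $0$ as $K\to\infty$. Since $\eta(\theta)$ is a positive multiple of this variance, the regularizing increment added to the Fisher information shrinks with $K$, which is exactly the assertion that the effect of density-fixing becomes weaker as the number of classes grows.

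The main obstacle is conceptual rather than computational: the discrete uniform distribution is a single fixed point with no free parameter, so I must be careful about what ``$\eta(\theta)$'' refers to. The resolution is to stay faithful to the parameterization used in Theorem~\ref{thm:asym_reg} --- treating $p_\theta(y)$ as a categorical family in its natural parameters and evaluating the curvature of $\mathbb{E}[\log p_\theta(y)]$ at the uniform configuration. This matters because the curvature of the KL term measured directly in the probability-simplex coordinates scales like $K$ and would point the wrong way; it is precisely the variance-of-sufficient-statistic form from the theorem's exponential-family computation that yields the decreasing $\frac{K-1}{K^2}$ and hence the stated weakening.
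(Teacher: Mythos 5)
Your proof is correct in its conclusion and is consistent with the theorem it invokes, but it reaches the result by a genuinely different computation than the paper. The paper's proof is a two-line assertion: it plugs the uniform density $p(y)=1/(K-1)$ into Theorem~\ref{thm:asym_reg} and claims without derivation that $\eta(\theta)=1/(K-1)^2$, then observes that this is decreasing in $K$. You instead go back to the exponential-family machinery inside the proof of Theorem~\ref{thm:asym_reg}, write the class distribution as a categorical family with indicator sufficient statistics, and evaluate $\eta(\theta)=\mathrm{Var}(T(y))$ at the uniform point, obtaining $\pi_i(1-\pi_i)=(K-1)/K^2$, which you then show is nonincreasing for $K\ge 2$. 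The two routes produce different explicit formulas for $\eta$ --- $1/(K-1)^2$ versus $(K-1)/K^2$ --- but both are decreasing in $K$, so the corollary's qualitative claim follows either way. Your version is the more defensible one: the paper's $1/(K-1)^2$ appears to be obtained simply by squaring the stated density and does not follow from the definition $\eta(\theta)=-\frac{\partial^2}{\partial\theta^2}\mathbb{E}[\log p_\theta(y)]$, whereas your variance-of-the-sufficient-statistic form is exactly what the theorem's own exponential-family calculation delivers. Your closing remark about the parameterization issue (the uniform target being a single point with no free parameter, so that the curvature must be taken in the natural-parameter coordinates of the family rather than on the simplex) identifies a real gap that the paper glosses over entirely; making that choice explicit is what saves the argument from being vacuous.
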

\begin{proof}
From theorem~\ref{thm:asym_reg}, the asymptotic behavior of density-fixing training is
\begin{eqnarray}
\sqrt{N}(\hat{\theta}^*_N - \theta)\sim N\Biggl(0, \frac{1}{I(\theta) + 1/(K-1)^2}\Biggr). \label{eq:uniform}
\end{eqnarray}
It is clear that $K\propto^{-1}\text{regularization effect}$ from the above equation.
\end{proof}
The discrete uniform distribution corresponds to the case of multi-class classification where all classes have the same probability of occurrence and is a general problem setup.

\begin{figure*}[t]
    \centering
    \includegraphics[scale=0.6, bb=0 0 1746 243]{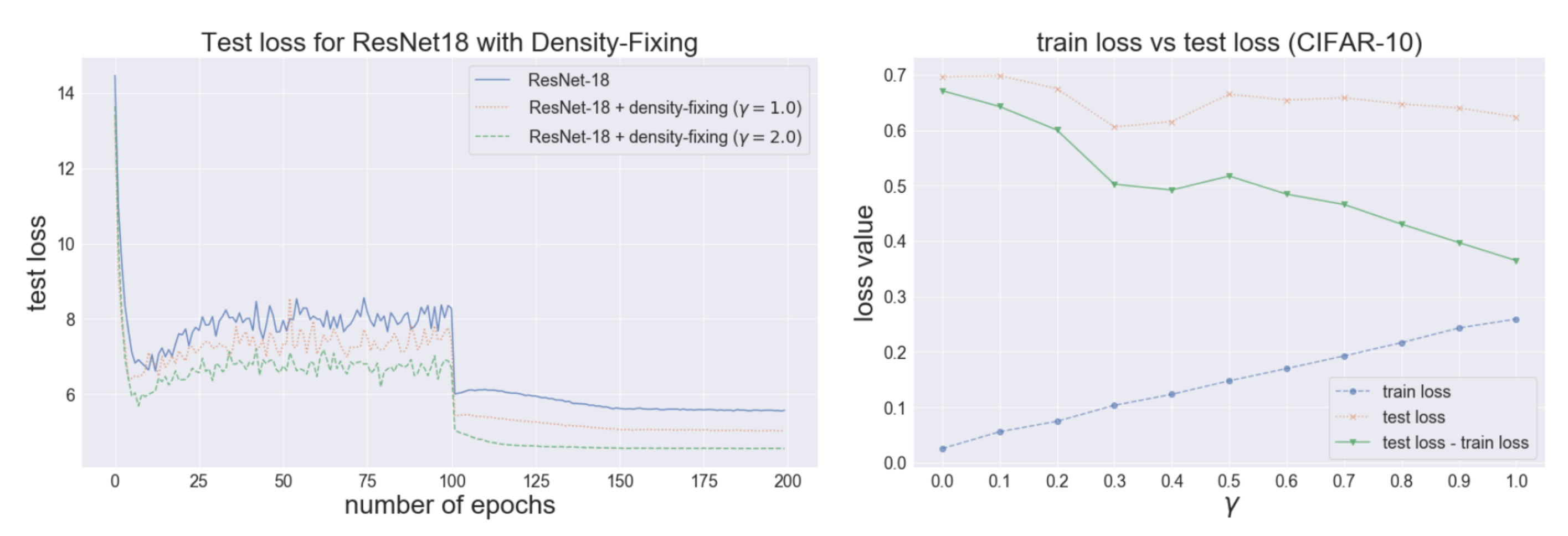}
    \caption{\label{fig:supervised_train_vs_test} Left: Test error evolution for the best baseline model and density-fixing.
    We can see that the test loss can be reduced by density-fixing regularization.
    Right: Test loss and train-test differences for each $\gamma$ in the supervised setting.
    It is noteworthy that the gap between train loss and test loss becomes smaller as the value of the density-fixing regularization parameter is increased.}
\end{figure*}

\subsubsection{Bernoulli Distribution}
We assume that $K=2$ and the probability density function of classes $p(y)$ is as follows:
\begin{equation}
    p(y) = \begin{cases}
    \xi & y = 1, \\
    (1-\xi) & y = 0,
    \end{cases}
\end{equation}
here $\xi\in[0, 1]$ and this is the Bernoulli distribution.
This case is a binary classification problem.
\begin{corollary}
For binary classification problem, our proposed method behaves to give strong regularization when there is a class imbalance.
\end{corollary}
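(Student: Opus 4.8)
The plan is to specialize Theorem~\ref{thm:asym_reg} to the Bernoulli family and read off the regularization strength directly from the correction term $\eta(\theta)$. First I would identify the parameter of interest as $\theta = \xi$ and write the per-sample log-density in the mean parameterization, $\log p_\xi(y) = y\log\xi + (1-y)\log(1-\xi)$ for $y \in \{0,1\}$. Differentiating twice with respect to $\xi$ gives $\frac{\partial^2}{\partial\xi^2}\log p_\xi(y) = -\frac{y}{\xi^2} - \frac{1-y}{(1-\xi)^2}$, and taking the expectation under the true Bernoulli law (for which $\mathbb{E}[y] = \xi$) yields
\begin{equation*}
\eta(\xi) = -\frac{\partial^2}{\partial\xi^2}\mathbb{E}\big[\log p_\xi(y)\big] = \frac{1}{\xi} + \frac{1}{1-\xi} = \frac{1}{\xi(1-\xi)}.
\end{equation*}

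Next I would substitute this into the asymptotic variance supplied by Theorem~\ref{thm:asym_reg}, obtaining
\begin{equation*}
\sqrt{N}(\hat{\theta}^*_N - \theta) \sim N\!\left(0, \frac{1}{I(\theta) + \frac{1}{\xi(1-\xi)}}\right),
\end{equation*}
so that the extra information contributed by density-fixing is exactly $\eta(\xi) = 1/(\xi(1-\xi))$. The final step is a one-variable monotonicity analysis of this quantity on $(0,1)$: the denominator $\xi(1-\xi)$ is maximized at $\xi = 1/2$ and decreases as $\xi$ moves toward either endpoint, so $\eta(\xi)$ attains its minimum value $4$ at the balanced point $\xi = 1/2$ and diverges, $\eta(\xi) \to \infty$, as $\xi \to 0$ or $\xi \to 1$. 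Since larger $\eta$ means a larger denominator in the asymptotic variance and hence a stronger reduction of the estimator's error, this shows that the regularization acts most strongly precisely when the two classes are most imbalanced, which is the claim.

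I expect the only real subtlety to be the parameterization and the choice of measure in the expectation defining $\eta$. Working in the mean parameter $\xi$ (as the corollary's statement does) gives the Fisher information $1/(\xi(1-\xi))$, which grows under imbalance; had one instead used the natural (logit) parameter, the analogous term would be the sufficient-statistic variance $\xi(1-\xi)$ that appears in the exponential-family computation of Theorem~\ref{thm:asym_reg}, whose behavior is reversed. I would therefore make explicit that $\eta$ is computed in the $\xi$-parameterization, and note that, because the estimator is consistent, evaluating the expectation under either $q(y)$ or $p_\theta(y)$ at the true parameter returns the same value, so the computation is unambiguous once the parameterization is fixed. The remaining steps are routine differentiation and a standard extremum calculation.
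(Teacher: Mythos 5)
Your proof is correct and follows essentially the same route as the paper: both compute $\eta(\xi) = -\frac{\partial^2}{\partial\xi^2}\mathbb{E}[\log p_\xi(y)] = 1/(\xi(1-\xi))$, insert it into the asymptotic variance $1/(I(\theta)+\eta(\xi))$ from Theorem~\ref{thm:asym_reg}, and observe that this extra term grows as $\xi$ moves away from $1/2$. Your explicit differentiation, the extremum argument, and especially the caveat about mean versus natural parameterization are welcome additions the paper omits --- indeed the paper's own exponential-family computation inside the proof of Theorem~\ref{thm:asym_reg} works in the natural parameter, where the correction term $\mathrm{Var}(T(y))=\xi(1-\xi)$ behaves oppositely, so fixing the parameterization as you do is not merely pedantic.
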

\begin{proof}
Our regularization term is
\begin{eqnarray}
-\frac{\partial^2}{\partial\xi^2}\Biggl\{\mathbb{E}\Big[\log{p(y)}\Big]\Biggr\} &=& -\mathbb{E}\Biggl[\frac{\partial^2}{\partial\xi^2}\log{p(y)}\Biggr] \nonumber\\
&=& I(\xi) = \frac{1}{\xi(1-\xi)},
\end{eqnarray}
and we can obtain the asymptotic behavior of density-fixing training as follows:
\begin{eqnarray}
\sqrt{N}(\hat{\theta}^*_N - \theta)\sim N\Biggl(0, \frac{1}{I(\theta) + 1/\xi(1-\xi)}\Biggr).
\end{eqnarray}
Thus, we can see that regularization is stronger when $\xi$ is away from $1/2$.
\end{proof}

\section{Connection to Knowledge Distillation}
Knowledge distillation~\cite{44873} is the model compression method in which a small model is trained to mimic a pre-trained larger model.
The large pretrained model is called the teacher model and the small model learned by knowledge distillation is called the student model.
In knowledge distillation, knowledge is transferred from the teacher model to the student model by minimizing a loss function in which the target is the distribution of class probabilities predicted by the teacher model.
Furthermore, it has been shown that knowledge distillation can be used not only for model compression, but also for model regularization~\cite{furlanello2018born,xie2020self}.
For example, it has been experimentally shown that an iterative knowledge distillation method using the same architecture for teacher and student models contributes to the generalization performance of the models~\cite{furlanello2018born}.

Let the output of the teacher model be $p_t(y)$ and the output of the student model be $p_s(y)$.
Here, the knowledge distillation scheme is formulated as follows:
\begin{equation}
    L_\theta = \alpha \ell(\bm{x}, y) + (1 - \alpha)\cdot D_{KL}[p_s(y|\bm{x})\|p_t(y|\bm{x})], \label{eq:knowledge_distillation}
\end{equation}
where $\alpha\in[0, 1]$ is the parameter.
Comparing Eq~\eqref{eq:density_fixing} and Eq~\eqref{eq:knowledge_distillation}, we can see that the knowledge distillation loss function is a special case of the density-fixing loss function.
In other words, knowledge distillation is the injection of knowledge extracted by the teacher model instead of human prior knowledge in density-fixing regularization.
Thus, we can switch between density-fixing and knowledge distillation based on how much we know about the distribution of the data.
Our proposed density-fixing is useful if we are very familiar with the distribution of the data, while knowledge distillation is useful if we have no knowledge of the data at all.

\section{Experimental Results}
In this section, we introduce our experimental results.
We implement the density-fixing regularization as follows:
\begin{equation}
    L(\bm{x}, y) = L_{CE}(\bm{x}, y) + \gamma D_{KL}[p_\theta(y)\|p(y)],
\end{equation}
where $L_{CE}(\bm{x}, y)$ is the cross-entropy loss and $\gamma$ is the weight parameter for the regularization term.
The implementation of density-fixing regularization is straightforward, Figure~\ref{fig:impl} shows the few lines of code necessary to implement density-fixing regularization in PyTorch~\cite{paszke2019pytorch}.

The datasets we use are CIFAR-10~\cite{krizhevsky2009learning}, CIFAR-100~\cite{krizhevsky2009learning}, STL-10~\cite{coates2011analysis} and SVHN~\cite{netzer2011reading}.
We determined the prior distribution of classes based on the number of data accounted for in each class of the data set, and we used ResNet-18~\cite{he2016deep} as the baseline model.

\begin{table}[t]
\centering
\caption{Top 1 and Top 5 test error comparison in the supervised setting.}
\label{tab:experiments_error_supervised}
\scalebox{0.75}{
\begin{tabular}{@{}llll@{}}
\toprule
Dataset & Model                                      & Top 1 Error     & Top 5 Error      \\ \midrule
CIFAR-10   & ResNet-18                               & 12.720\%        &   0.812\%        \\
           & ResNet-18 + density-fixing ($\gamma=1$) & 12.230\%        &   0.779\%        \\
           & ResNet-18 + density-fixing ($\gamma=2$) & ${\bf 12.020\%}$&   ${\bf 0.752\%}$\\
           & ResNet-18 + density-fixing ($\gamma=4$) & 12.341\%        &   0.770\%        \\ \midrule
CIFAR-100  & ResNet-18                               & 25.562\%        &   6.710\%        \\
           & ResNet-18 + density-fixing ($\gamma=1$) & 25.241\%        &   6.302\%        \\
           & ResNet-18 + density-fixing ($\gamma=2$) & 25.965\%        &   6.887\%        \\
           & ResNet-18 + density-fixing ($\gamma=4$) & ${\bf 24.981\%}$&   ${\bf 6.122\%}$\\
            
\bottomrule
\end{tabular}
}
\end{table}

\begin{table}[t]
\centering
\caption{Top 1 test error comparison for each dataset in the semi-supervised setting.
The datasets we use are CIFAR-10, CIFAR-100, STL-10 and SVHN.}
\label{tab:experiments_error}
\scalebox{0.788}{
\begin{tabular}{@{}lllllll@{}}
\toprule
dataset   & $\gamma=0$ & $\gamma=0.1$    & $\gamma=0.2$   & $\gamma=0.4$   & $\gamma=0.8$ & $\gamma=1.0$ \\ \midrule
CIFAR-10  & 28.235     & ${\bf 28.141}$  & 28.510         & 29.086         & 30.964       & 30.892       \\
CIFAR-100 & 66.622     & 66.723          & 66.861         & ${\bf 65.918}$ & 66.895       & 67.007       \\
STL-10    & 59.770     & ${\bf 59.553}$  & 60.110         & 60.124         & 60.405       & 60.897       \\
SVHN      & 27.937     & 28.028          & ${\bf 27.601}$ & 30.110         & 32.025       & 32.879       \\ \bottomrule
\end{tabular}
}
\end{table}

\begin{figure*}[t]
    \centering
    \includegraphics[scale=0.28, bb=0 0 1746 953]{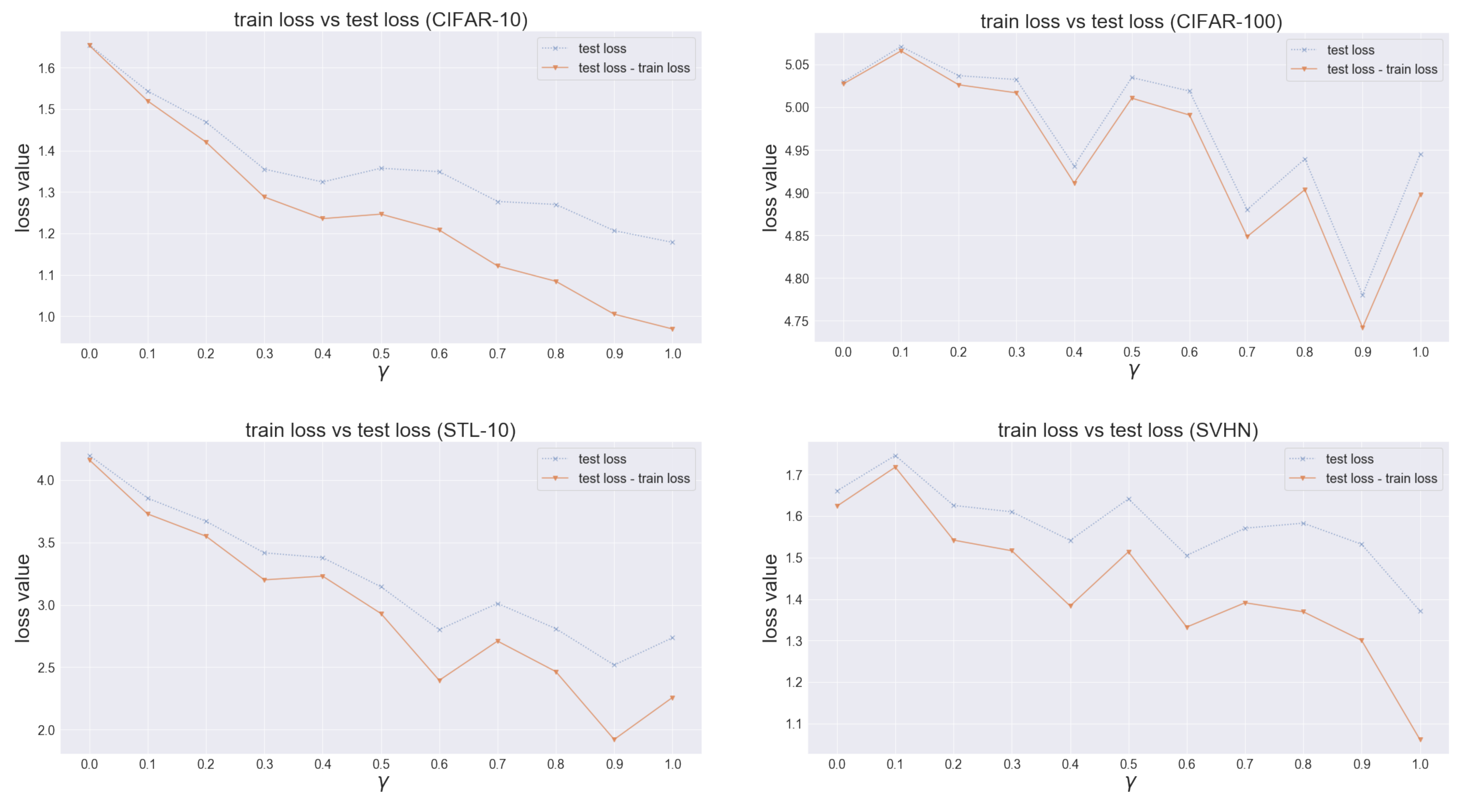}
    \caption{\label{fig:train_vs_test} Test loss and train-test differences for each $\gamma$ in the semi-supervised setting.
We can see that by increasing the parameter $\gamma$, we can reduce the generalization gap.
We can see that the generalization gap tends to be smaller as we increase the value of $\gamma$.}
\end{figure*}

\begin{table}[t]
\centering
\caption{Mean IOU comparison for Pascal VOC dataset.}
\label{tab:voc_results}
\scalebox{0.87}{
\begin{tabular}{ll}
\toprule
Model                                         & mean IOU               \\ \midrule
FCN ResNet-50                                 & $0.5397 (\pm{0.0073})$ \\
FCN ResNet-50 + density-fixing ($\gamma=0.1$) & $0.5402 (\pm{0.0078})$ \\
FCN ResNet-50 + density-fixing ($\gamma=0.5$) & $0.5475 (\pm{0.0055})$ \\
FCN ResNet-50 + density-fixing ($\gamma=1.0$) & ${\bf 0.5531 (\pm{0.0061})}$ \\
FCN ResNet-50 + density-fixing ($\gamma=2.0$) & $0.5487 (\pm{0.0060})$ \\ \bottomrule
\end{tabular}
}
\end{table}

\subsection{Supervised Classification}
In this subsection, we introduce the supervised classification results, and we assumed a discrete uniform distribution for the class distribution.

Figure~\ref{fig:supervised_train_vs_test} shows the experimental results for CIFAR-10 with density-fixing regularization.
As seen in the left of this figure, baseline model and density-fixing converge at a similar speed to their best test errors.
At around $100$ epoch, a second loss reduction, Deep Double Descent~\cite{Nakkiran2020Deep}, can be observed, but this phenomenon is not disturbed by density-fixing.
From the right, we can see that by increasing the parameter $\gamma$, we can reduce the generalization gap.

Also, Table~\ref{tab:experiments_error_supervised} shows the contribution of density-fixing to the reduction of test errors.

\subsection{Semi-Supervised Classification}
In this subsection, we introduce the semi-supervised classification results.
In our experiments, we assumed a discrete uniform distribution for the class distribution and treated $1/5$ of the training data as labeled and $4/5$ of the training data as unlabeled.

Figure~\ref{fig:train_vs_test} show test loss and train-test differences for each $\gamma$ in the semi-supervised setting.
We can see that by increasing the parameter $\gamma$, it reduce the generalization gap.
In addition, CIFAR-10 and CIFAR-100, which consist of images from the same domain, have $10$ and $100$ classes, respectively, but the experimental results show that CIFAR-10 has a more significant regularization effect than CIFAR-100.
This result supports our example in Eq~\eqref{eq:uniform}.

Table~\ref{tab:experiments_error} shows a comparison of classification error for each $\gamma$.
These experimental results show that our regularization leads to improving error on the test data.

\begin{figure}[t]
    \centering
    \includegraphics[scale=0.31, bb=0 0 840 448]{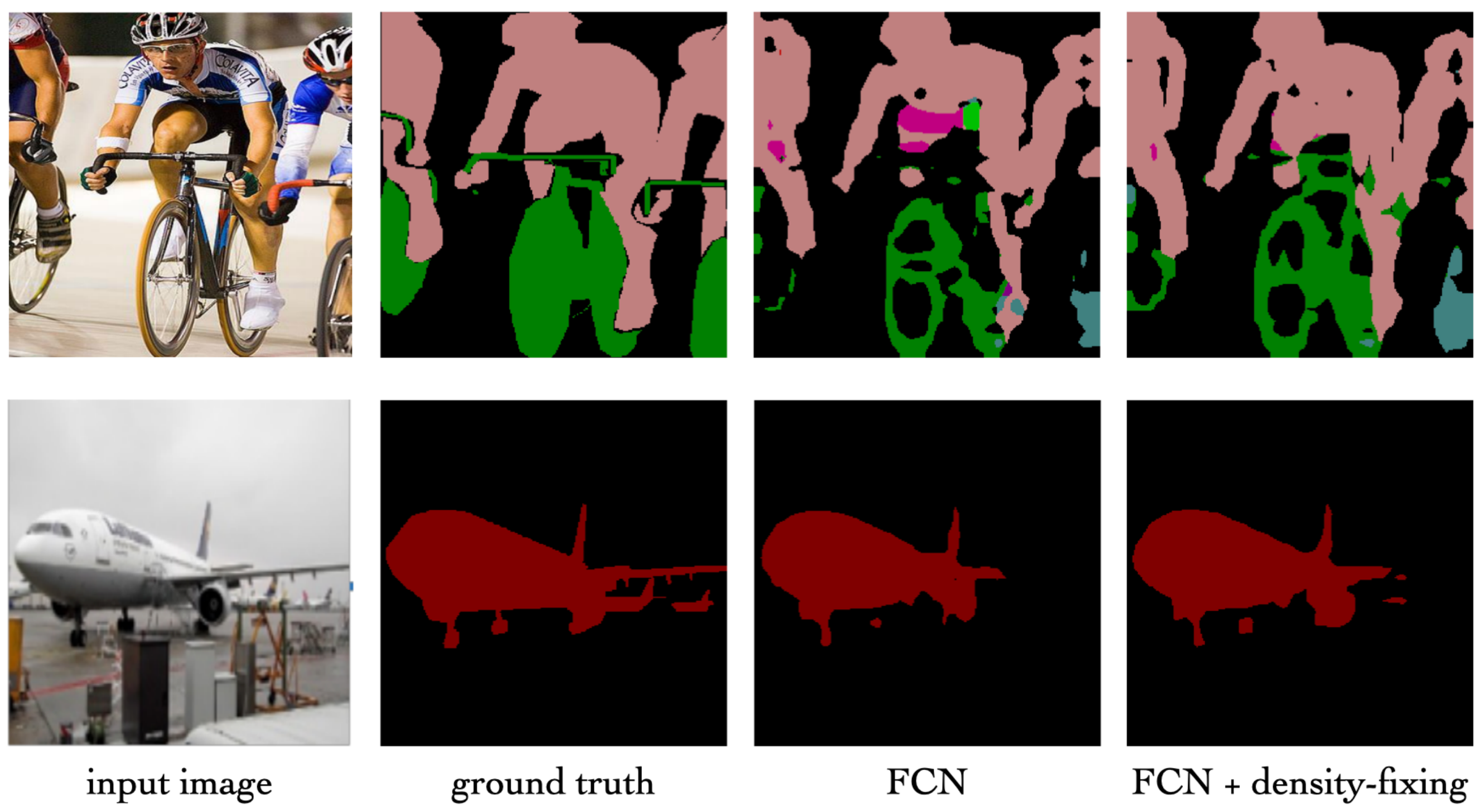}
    \caption{\label{fig:voc_segmentation} The experimental result on semantic segmentation in the Pascal VOC2012 dataset. We can see that density-fixing can improve the performance of the semantic segmentation task.}
\end{figure}

\subsection{Semantic Segmentation for the Pascal VOC Dataset}
We present the results of an experiment on semantic segmentation in the Pascal VOC2012 dataset~\cite{Everingham15}.
This dataset contains around $15,000$ labeled images belonging to $20$ categories.

We computed pixel-wise the frequency of class occurrence in the training data as a prior distribution to apply our density-fixing.
We also used FCN ResNet50~\cite{long2015fully} as a baseline model to investigate the performance difference with and without density-fixing.

Table~\ref{tab:voc_results} shows a comparison of mean IOU for each models.
We can see that our density-fixing regularization can improve the performance.

Figure~\ref{fig:voc_segmentation} shows the visualization of the semantic segmentation result.
In this experiment, we apply our density-fixing with parameter $\gamma=1$.
Recall that our density-fixing regularization has the zero-forcing property, and we can see that our regularization term contributes to eliminating the labels that never happens (e.g., the light green label in the first row which occurs in the result of FCN represents the sofa, and density-fixing eliminate it.).

\subsection{Stabilization of Generative Adversarial Networks}
Generative Adversarial Networks (GANs)~\cite{goodfellow2014generative} is one of the powerful generative model paradigms that are currently successful in various tasks, such as image generation~\cite{karras2018progressive,brock2018large} and image-to-image translation~\cite{choi2018stargan,zhu2017unpaired}.
However, GANs have the problem that their learning is very unstable.
To tackle this problem, many approaches have been proposed~\cite{miyato2018spectral,arjovsky2017wasserstein}.
We suggest that regularization by density-fixing contributes to improving the stability of GANs.
The density-fixing formulation of GANs is:
\begin{equation}
    \max_G\min_D\mathbb{E}_{\bm{x},\bm{z}}\ell(D(\bm{x}), 1) + \ell(D(G(\bm{z})), 0) + D_{KL}[p_D(y)\|q(y)],
\end{equation}
where $D$ is the discriminator, $G$ is the generator,  $\ell$ is the binary cross entropy and $q(y) = Ber(0.5)$.

Figure~\ref{fig:gan_experiment} illustrates the stabilizing effect of density-fixing the training of GANs when modeling a toy dataset (blue samples). The neural networks in these experiments are fully-connected and have three hidden layers of $512$ ReLU units.
We can see that density-fixing contributes to the stabilization of the training of GANs.

\begin{figure}[t]
    \centering
    \includegraphics[scale=0.27, bb=0 0 860 499]{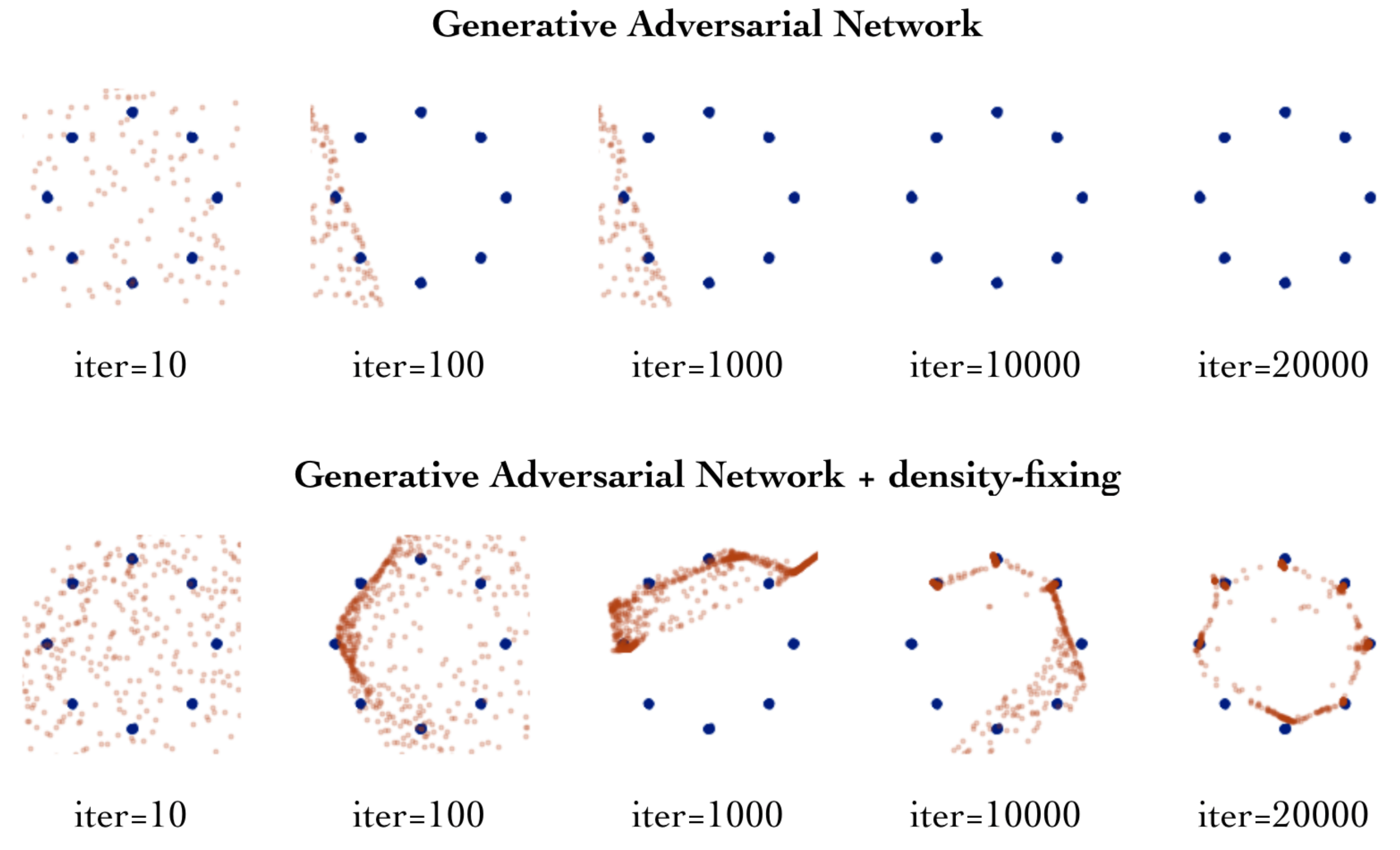}
    \caption{\label{fig:gan_experiment} Effect of density-fixing on stabilizing GANs training with $\gamma=1$.}
\end{figure}

\begin{figure}[t]
\centering
\begin{lstlisting}[language=Python]
for i, (inputs, targets) in enumerate(train_loader):
    outputs = model(inputs)
    preds = torch.softmax(outputs, 1)
    
    # density-fixing regularization
    # np.random.uniform
    p_y = uniform(0, 1, (batch_size, n_classes)
    p_y = torch.Tensor(p_y)
    p_y = torch.softmax(p_y, 1)
    R = nn.KLDivLoss()(p_y.log(), preds)
    
    # add regularization term
    loss = criterion(outputs, targets) + gamma * R
    loss.backward()
\end{lstlisting}
\caption{\label{fig:impl} Few lines of code necessary to implement density-fixing regularization in PyTorch.}
\end{figure}

\section{Conclusion and Discussion}
In this paper, we proposed a framework of regularization methods that can be used commonly for both supervised and semi-supervised learning.
Our proposed regularization method improves the generalization performance by forcing the model to approximate the prior distribution of the class.
We proved that this regularization term is naturally derived from the formula of maximum likelihood estimation.
We further investigated the asymptotic behavior of the proposed method and how the regularization terms behave when assuming a prior distribution of several classes in practice.
Our experimental results have sufficiently demonstrated the effectiveness of our proposed method.

{\small
\bibliographystyle{ieee_fullname}

}

\end{document}